\DeclareMathOperator*{\diag}{diag}
\DeclareMathOperator*{\expec}{\mathbb E}
\newcommand{\X}{{\mathbf X}}
\newcommand{\x}{{\mathbf x}}
\newcommand{\bs}{{\mathbf s}}
\newcommand{\bo}{{\mathbf o}}
\newcommand{\bC}{{\mathbf C}}
\newcommand{\bz}{{\mathbf z}}
\newcommand{\bd}{{\mathbf d}}
\newcommand{\ft}{{\mathbf f}}
\newcommand{\Ft}{{\mathbf F}}
\newcommand{\bt}{{\mathbf t}}
\newcommand{\Pt}{{\mathbf P}}
\newcommand{\Qt}{{\mathbf Q}}
\newcommand{\bbb}{{\mathbf b}}
\newcommand{\bh}{{\mathbf h}}
\newcommand{\y}{{\mathbf y}}
\newcommand{\bP}{{\mathbb{P}}}
\newcommand{\be}{{\mathbf{e}}}
\newcommand{\bp}{{\mathbf{p}}}
\newcommand{\cT}{{\mathcal{T}}}
\newcommand{\proj}{{\mathtt{Proj}}}
\newcommand{\wt}{\mathbf{w}}
\newcommand{\bc}{\mathbf{c}}
\newcommand{\Wt}{\mathbf{W}}
\newcommand{\vt}{\mathbf{v}}
\newcommand{\Vt}{{\mathbf V}}
\newcommand{\bO}{{\mathbf 1}}
\newcommand{\bR}{{\mathbb R}}
\newcommand{\dd}{{\partial}}
\newcommand{\bA}{{\mathbf A}}
\newcommand{\bD}{{\mathbf D}}
\newcommand{\bU}{{\mathbf U}}
\newtheorem{thm}{Theorem}
\newtheorem{cor}[thm]{Corollary}
\newtheorem{defn}{Definition}
\theoremstyle{remark}
\newenvironment{proofsketch}{\noindent{\emph{Proof sketch.}}}%
        {\hspace*{\fill}$\Box$\par}
\icmltitlerunning{Strongly-Typed Recurrent Neural Networks}
\begin{document} 
\twocolumn[
\icmltitle{Strongly-Typed Recurrent Neural Networks}

\icmlauthor{David Balduzzi$^1$}{dbalduzzi@gmail.com}
\icmlauthor{Muhammad Ghifary$^{1,2}$}{mghifary@gmail.com}
\icmladdress{${}^1$Victoria University of Wellington, New Zealand\\
${}^2$Weta Digital, New Zealand}

\icmlkeywords{recurrent neural networks, deep learning}

\vskip 0.3in
]

\begin{abstract} 
	Recurrent neural networks are increasing popular models for sequential learning. Unfortunately, although the most effective RNN architectures are perhaps excessively complicated, extensive searches have not found simpler alternatives. This paper imports ideas from physics and functional programming into RNN design to provide guiding principles. From physics, we introduce type constraints, analogous to the constraints that forbids adding meters to seconds. From functional programming, we require that strongly-typed architectures factorize into stateless learnware and state-dependent firmware, reducing the impact of side-effects. The features learned by strongly-typed nets have a simple semantic interpretation via dynamic average-pooling on one-dimensional convolutions. We also show that strongly-typed gradients are better behaved than in classical architectures, and characterize the representational power of strongly-typed nets. Finally, experiments show that, despite being more constrained, strongly-typed architectures achieve lower training and comparable generalization error to classical architectures.
\end{abstract}

\section{Introduction}
\label{sec:intro}

Recurrent neural networks (RNNs) are models that learn nonlinear relationships between sequences of inputs and outputs. Applications include speech recognition \citep{graves:13}, image generation \citep{gregor:15}, machine translation \citep{sutskever:14} and image captioning \citep{vinyals:15,karpathy:15}. Training RNNs is difficult due to exploding and vanishing gradients \citep{hochreiter:91,bengio:94,pascanu:13}. Researchers have therefore developed gradient-stabilizing architectures such as Long Short-Term Memories or LSTMs \citep{hochreiter:97} and Gated Recurrent Units or GRUs \citep{cho:14}.

Unfortunately, LSTMs and GRUs are complicated and contain many components whose roles are not well understood. Extensive searches \citep{bayer:09,jozefowicz:15,greff:15} have not yielded significant improvements. This paper takes a fresh approach inspired by dimensional analysis and functional programming.

\paragraph{Intuition from dimensional analysis.}
Nodes in neural networks are devices that, by computing dot products, measure the similarity of their inputs to representations encoded in weight matrices. Ideally, the representation learned by a net should ``carve nature at its joints''. An exemplar is the \emph{system of measurement} that has been carved out of nature by physicists. It prescribes units for expressing the readouts of standardized measuring devices (e.g. kelvin for thermometers and seconds for clocks) and rules for combining them. 

A fundamental rule is the \emph{principle of dimensional homogeneity}: it is only meaningful to add quantities expressed in the same units \citep{bridgman:22,hart:95}. For example adding seconds to volts is inadmissible. 
In this paper, we propose to take the measurements performed by neural networks as seriously as physicists take their measurements, and apply the principle of dimensional homogeneity to the representations learned by neural nets, see section~\ref{sec:types}.

\paragraph{Intuition from functional programming.}
Whereas feedforward nets learn to approximate functions, recurrent nets learn to approximate programs -- suggesting lessons from language design are relevant to RNN design. Language researchers stress the benefits of constraints: eliminating \texttt{GOTO} \citep{dijkstra:68}; introducing type-systems that prescribe the interfaces between parts of computer programs and guarantee their consistency \citep{pierce:02}; and working with stateless (pure) functions. 

For our purposes, types correspond to units as above. Let us therefore discuss the role of states. The reason for recurrent connections is precisely to introduce state-dependence. Unfortunately, state-dependent functions have side-effects -- unintended knock-on effects such as exploding gradients.

State-dependence without side-effects is not possible. The architectures proposed below encapsulate states in \emph{firmware} (which has no learned parameters) so that the \emph{learnware} (which encapsulates the parameters) is stateless. It follows that the learned features and gradients in strongly-typed architectures are better behaved and more interpretable than their classical counterparts, see section~\ref{sec:trnns}. 

Strictly speaking, the ideas from physics (to do with units) and functional programming (to do with states) are independent. However, we found that they complemented each other. We refer to architectures as strongly-typed when they both (i) preserve the type structure of their features and (ii) separate learned parameters from state-dependence. 

\paragraph{Overview.}
The core of the paper is section~\ref{sec:types}, which introduces strongly-typed linear algebra. As partial motivation, we show how types are implicit in principal component analysis and feedforward networks. A careful analysis of the update equations in vanilla RNNs identifies a flaw in classical RNN designs that leads to incoherent features. Fixing the problem requires new update equations that preserve the type-structure of the features.

Section~\ref{sec:trnns} presents strongly-typed analogs of standard RNN architectures. 
It turns out that small tweaks to the standard update rules yield simpler features and gradients, theorem~\ref{thm:sem} and corollary~\ref{thm:grad}. Finally, theorem~\ref{thm:rep} shows that, despite their more constrained architecture, strongly-typed RNNs have similar representational power to classical RNNs. Experiments in section~\ref{sec:exp} show that strongly-typed RNNs have comparable generalization performance and, surprisingly, lower training error than classical architectures (suggesting greater representational power). The flipside is that regularization appears to be more important for strongly-typed architectures, see experiments. 

\paragraph{Related work.}

The analogy between neural networks and functional programming was proposed in \citep{colah:15}, which also argued that representations should be interpreted as types. This paper extends Olah's proposal.
Prior work on typed-linear algebra \citep{macedo:13} is neither intended for nor suited to applications in machine learning. Many familiar RNN architectures already incorporate forms of \emph{weak-typing}, see section~\ref{sec:weak}.

\section{Strongly-Typed Features}
\label{sec:types}

A variety of type systems have been developed for mathematical logic and language design \citep{reynolds:74,girard:89,pierce:02}. We introduce a type-system based on linear algebra that is suited to deep learning. Informally, a \emph{type} is a vector space with an orthogonal basis. A more precise definition along with rules for manipulating types is provided below. Section~\ref{sec:egs} provides examples; section~\ref{sec:inconsistent} uses types to identify a design flaw in classical RNNs.

\subsection{Strongly-Typed Quasi-Linear Algebra}
\label{sec:stla}

Quasi-linear algebra is linear algebra supplemented with nonlinear functions that act coordinatewise.

\begin{defn}\label{def:type}
	Dot-products are denoted by $\langle\wt,\x\rangle$ or $\wt^\intercal\x$. 
	A \textbf{type} $\cT=\left(V, \langle\bullet,\bullet\rangle,\{\bt_i\}_{i=1}^d\right)$  is a $d$-dimensional vector space equipped with an inner product and an orthogonal basis such that $\langle\bt_i,\bt_j\rangle=\bO_{[i=j]}$.
\end{defn}
Given type $\cT$, we can represent vectors in $\vt\in V$ as real-valued $d$-tuples via
\begin{equation}
	\vt_\cT \leftrightarrow(v_1,\ldots, v_d)\in\bR^d
	\quad\text{where }v_i:= \langle\vt,\bt_i\rangle.
\end{equation}
\begin{defn}\label{def:admissible}
	The following operations are \emph{admissible}:
	\begin{enumerate}[T1.]
		\item \textbf{Unary operations on a type: $\cT\rightarrow \cT$}\\
		Given a function $f:\bR\rightarrow\bR$ (e.g. scalar multiplication, sigmoid $\sigma$, tanh $\tau$ or relu $\rho$), define
		\begin{equation}
			f(\vt) := \big(f(v_1),\ldots, f(v_d)\big)\in\cT.
		\end{equation}		
		\item \textbf{Binary operations on a type: $\cT\times \cT\rightarrow \cT$}\\
		Given $\vt,\wt \in\cT$ and an elementary binary operation $\mathsf{bin}\in\{+,-,\max,\min, \pi_1,\pi_2\}$\footnote{Note: $\pi_i$ is projection onto the $i^\text{th}$ coordinate.}, define
		\begin{equation}
			\mathsf{bin}(\vt,\wt) 
			:= \big(\mathsf{bin}(v_1,w_1),\ldots, \mathsf{bin}(v_d,w_d)\big).
		\end{equation}
		Binary operations on two different types (e.g. adding vectors expressed in different orthogonal bases) are \emph{not} admissible. 
		\item \textbf{Transformations between types: $\cT_1\rightarrow \cT_2$}\\
		A type-transform is a linear map $\Pt:V_1\rightarrow V_2$ such that $\Pt(\bt^{(1)}_i) = \bt^{(2)}_i$ for $i=\{1,\ldots,\min(d_1,d_2)\}$.
		Type-transformations are orthogonal matrices. 
		\item \textbf{Diagonalization: $\cT_1\rightarrow (\cT_2\rightarrow \cT_2)$}\\
		Suppose that $\vt\in \cT_1$ and $\wt\in \cT_2$ have the same dimension. Define
		\begin{equation}
			\vt_{\cT_1}\odot\wt_{\cT_2} := (v_1\cdot w_1,\ldots, v_d\cdot w_d)\in \cT_2,
		\end{equation}
		where $v_i:=\langle\vt,\bt^{(1)}_i\rangle$ and $w_i:=\langle\wt,\bt^{(2)}_i\rangle$. Diagonalization converts type $\cT_1$ into a new type, $\cT_2\rightarrow\cT_2$, that acts on $\cT_2$ by coordinatewise scalar multiplication.
	\end{enumerate}
\end{defn}
Definition~\ref{def:type} is inspired by how physicists have carved the world into an orthogonal basis of meters, amps, volts etc. The analogy is not perfect: e.g. $f(x)=x^2$ maps meters to square-meters, whereas types are invariant to coordinatewise operations. Types are looser than physical units.

\subsection{Motivating examples}
\label{sec:egs}

We build intuition by recasting PCA and feedforward neural nets from a type perspective.

\paragraph{Principal component analysis (PCA).}
Let $\X\in \bR^{n\times d}$ denote $n$ datapoints $\{\x^{(1)},\ldots,\x^{(n)}\}\subset \bR^d$. PCA factorizes 
$\X^\intercal\X = \Pt^\intercal\bD\Pt$
where $\Pt$ is a $(d\times d)$-orthogonal matrix and $\bD=\diag(\bd)$ contains the eigenvalues of $\X^\intercal \X$.
A common application of PCA is dimensionality reduction. From a type perspective, this consists in:
\begin{equation}
	\cT_{\{\be_k\}}\xrightarrow[(i)]{\Pt}\cT_{\{\bp_k\}}\xrightarrow[(ii)]{\proj}\cT_{\{\bp_k\}}\xrightarrow[(iii)]{\Pt^\intercal}\cT_{\{\be_k\}},
\end{equation}
(i) transforming the standard orthogonal basis $\{\be_k\}_{k=1}^d$ of $\bR^d$ into the \emph{latent type} given by the rows of $\Pt$;
(ii) projecting onto a subtype (subset of coordinates in the latent type); and 
(iii) applying the inverse to recover the original type.

\paragraph{Feedforward nets.}
The basic feedforward architecture is stacked layers computing $\bh = f(\Wt\cdot \x)$ where $f(\bullet)$ is a nonlinearity applied coordinatewise. We present two descriptions of the computation. 

The standard description is in terms of dot-products. Rows of $\Wt$ correspond to features, and matrix multiplication is a collection of dot-products that measure the similarity between the input $\x$ and the row-features:
\begin{equation}
	\Wt\x = \left(\begin{matrix}
		\cdots & \wt_1 & \cdots \\
		 &\vdots & \\
		\cdots & \wt_d & \cdots
	\end{matrix}\right)
	\x
	= \left(\begin{matrix}
		\langle \wt_{1}, \x\rangle \\
		\vdots \\
		 \langle\wt_{d}, \x\rangle
	\end{matrix}\right).
\end{equation}
Types provide a finer-grained description. Factorize $\Wt = \Pt \bD \Qt^\intercal$ by \emph{singular value decomposition} into $\bD=\diag(\bd)$ and orthogonal matrices $\Pt$ and $\Qt$. The layer-computation can be rewritten as $\bh = f(\Pt\bD \Qt^\intercal\x)$. From a type-perspective, the layer thus:
\begin{equation}
	\cT_\x \xrightarrow[(i)]{\Qt^\intercal} \cT_\text{latent}
	\xrightarrow[(ii)]{\diag(\bd)\odot\bullet}\cT_\text{latent}
	\xrightarrow[(iii)]{\Pt}\cT_\bh
	\xrightarrow[(iv)]{f(\bullet)}\cT_{\bh},
\end{equation}
(i) transforms $\x$ to a latent type; (ii) applies coordinatewise scalar multiplication to the latent type; (iii) transforms the result to the output type; and (iv) applies a coordinatewise nonlinearity. Feedforward nets learn interleaved sequences of type transforms and unary, type-preserving operations.

\subsection{Incoherent features in classical RNNs}
\label{sec:inconsistent}

There is a subtle inconsistency in classical RNN designs that leads to incoherent features. Consider the updates:
\begin{equation}
	\text{vanilla RNN:} \quad \bh_t = \sigma(\Vt\cdot\bh_{t-1} + \Wt\cdot\x_t + \bbb).
	\label{eq:rnn}
\end{equation}
We drop the nonlinearity, since the inconsistency is already visible in the linear case. Letting $\bz_t:=\Wt\x_t$ and unfolding Eq.~\eqref{eq:rnn} over time obtains
\begin{equation}
	\label{eq:rnn_unroll}
	\bh_t = \sum_{s=1}^t \Vt^{t-s}\cdot\bz_s.
\end{equation}
The inconsistency can be seen via dot-products and via types. From the dot-product perspective, observe that multiplying an input by a matrix squared yields
\begin{align}
	\Vt^2\bz 
	= \left(\begin{matrix}
		\vt_1  \\
		 \vdots  \\
		\vt_d 
	\end{matrix}\right)	
	\left(\begin{matrix}
	 \vdots & & \vdots \\
		\bc_1  &
		 \cdots &
		\bc_d \\
		\vdots & & \vdots 
	\end{matrix}\right)
	\bz
	= \left(\begin{matrix}
		\Big\langle(\vt_{1}^\intercal \bc_i\big)_{i=1}^d, \bz\Big\rangle \\
		\vdots \\
		\Big\langle(\vt_{d}^\intercal \bc_i\big)_{i=1}^d, \bz\Big\rangle 
	\end{matrix}\right),
\end{align}
\vspace{-10mm}

where $\vt_i$ refers to rows of $\Vt$ and $\bc_i$ to columns. Each coordinate of $\Vt^2\bz$ is computed by measuring the similarity of a row of $\Vt$ to all of its columns, and then measuring the similarity of the result to $\bz$. In short, features are tangled and uninterpretable.

From a type perspective, apply an SVD to $\Vt=\Pt\bD\Qt^\intercal$ and observe that $\Vt^2 = \Pt \bD \Qt^\intercal \Pt \bD \Qt^\intercal$. Each multiplication by $\Pt$ or $\Qt^\intercal$ transforms the input to a new type, obtaining
\begin{equation}
	\underbrace{\cT_\bh \xrightarrow{\bD\Qt^\intercal} \cT_\text{lat$_1$}
		\xrightarrow{\Pt}\cT_\text{lat$_2$}}_{\Vt}
	\underbrace{\xrightarrow{\bD\Qt^\intercal}\cT_\text{lat$_3$}
	\xrightarrow{\Pt}\cT_\text{lat$_4$}}_{\Vt}.
\end{equation}
Thus $\Vt$ sends $\bz\mapsto \cT_\text{lat$_2$}$ whereas $\Vt^2$ sends $\bz\mapsto \cT_\text{lat$_4$}$. Adding terms involving $\Vt$ and $\Vt^2$, as in Eq.~\eqref{eq:rnn_unroll}, entails adding vectors expressed in different orthogonal bases -- which is analogous to adding joules to volts. The same problem applies to LSTMs and GRUs. 

Two recent papers provide empirical evidence that recurrent (horizontal) connections are problematic even after gradients are stabilized: \citep{zaremba:15} find that Dropout performs better when restricted to vertical connections and \citep{laurent:15} find that Batch Normalization fails unless restricted to vertical connections \citep{ioffe:15}. More precisely, \citep{laurent:15} find that Batch Normalization improves training but not test error when restricted to vertical connections; it fails completely when also applied to horizontal connections. 

Code using \texttt{GOTO} can be perfectly correct, and RNNs with type mismatches can achieve outstanding performance. Nevertheless, both lead to spaghetti-like information/gradient flows that are hard to reason about. 

\paragraph{Type-preserving transforms.}
One way to resolve the type inconsistency, which we do not pursue in this paper, is to use symmetric weight matrices so that $\Vt = \Pt\bD\Pt^\intercal$ where $\Pt$ is orthogonal and $\bD=\diag(\bd)$. From the dot-product perspective,
\begin{equation}
	\bh_t = \sum_{s=1}^t \Pt\bD^{t-s}\Pt^\intercal\bz_s,
\end{equation}
which has the simple interpretation that $\bz$ is amplified (or dampened) by $\bD$ in the latent type provided by $\Pt$. From the type-perspective, multiplication by $\Vt^k$ is type-preserving
\begin{equation}
	\underbrace{\cT_\bh \xrightarrow{\Pt^\intercal}\cT_{\text{lat}_1}
		\xrightarrow{\bd^k\odot \bullet}\cT_{\text{lat}_1}\xrightarrow{\Pt}\cT_\bh}_{\Vt^k}
\end{equation}
so addition is always performed in the same basis. 

A familiar example of type-preserving transforms is autoencoders -- under the constraint that the decoder $\Wt^\intercal$ is the transpose of the encoder $\Wt$. Finally, \citep{moczulski:15} propose to accelerate matrix computations in feedforward nets by interleaving diagonal matrices, $\bA$ and $\bD$, with the orthogonal discrete cosine transform, $\bC$. The resulting transform, $\bA\bC\bD\bC^\intercal$, is type-preserving.

\section{Recurrent Neural Networks}
\label{sec:trnns}

We present three strongly-typed RNNs that purposefully mimic classical RNNs as closely as possible. Perhaps surprisingly, the tweaks introduced below have deep structural implications, yielding architectures that are significantly easier to reason about, see sections~\ref{sec:semantics} and \ref{sec:algebra}.

\subsection{Weakly-Typed RNNs}
\label{sec:weak}

We first pause to note that many classical architectures are \emph{weakly-typed}. That is, they introduce constraints or restrictions on off-diagonal operations on recurrent states.

The memory cell $\bc$ in LSTMs is only updated coordinate-wise and is therefore well-behaved type-theoretically -- although the overall architecture is not type consistent. The gating operation $\bz_t\odot\bh_{t-1}$ in GRUs \emph{reduces} type-inconsistencies by discouraging (i.e. zeroing out) unnecessary recurrent information flows.

SCRNs, or Structurally Constrained Recurrent Networks \citep{mikolov:15}, add a type-consistent state layer:
\begin{equation}
	\bs_t = \alpha\cdot \bs_{t-1} + (1-\alpha)\cdot \Wt_s \x_t,
	\quad\text{where }\alpha\text{ is a scalar.}
\end{equation}
In MUT1, the best performing architecture in \citep{jozefowicz:15}, the behavior of $\bz$ and $\bh$ is well-typed, although the gating by ${\mathbf r}$ is not. Finally, I-RNNs initialize their recurrent connections as the identity matrix \citep{le:15}. In other words, the key idea is a type-consistent initialization.

\subsection{Strongly-Typed RNNs}
\label{sec:rnns}

The vanilla strongly-typed RNN is
\begin{align}
	& \bz_t  = \Wt\x_t
	\label{eq:z}\\
	\text{T-RNN} \qquad & \ft_t  = \sigma(\Vt\x_t + \bbb)
	\label{eq:f}\\ 
	& \bh_t  = \ft_t\odot \bh_{t-1} + (1-\ft_t)\odot \bz_t
	\label{eq:h}		
\end{align}
The T-RNN has similar parameters to a vanilla RNN, Eq~\eqref{eq:rnn}, although their roles have changed. A nonlinearity for $\bz_t$ is not necessary because: (i) gradients do not explode, corollary~\ref{thm:grad}, so no squashing is needed; and (ii) coordinatewise multiplication by $\ft_t$ introduces a nonlinearity. Whereas relus are binary gates (0 if $\bz_t<0$, 1 else); the forget gate $\ft_t$ is a \emph{continuous} multiplicative gate on $\bz_t$.
 
Replacing the horizontal connection $\Vt\bh_{t-1}$ with a vertically controlled gate, Eq.~\eqref{eq:f}, stabilizes the type-structure across time steps. Line for line, the type structure is:
\begin{align}
	\begin{matrix}
	\cT_\x & \xrightarrow{\quad\eqref{eq:z}\quad} & \cT_\bh \\
	\cT_\x & \xrightarrow{\quad\eqref{eq:f}\quad} & \cT_\ft 
	 & \xrightarrow{\diag} & (\cT_{\bh}\rightarrow \cT_{\bh})\\
	(\underbrace{\cT_{\bh}\rightarrow \cT_{\bh}}_{\ft_t}) \times \underbrace{\cT_\bh}_{\bz_t}  & \xrightarrow[\bh_{t-1}]{\quad\eqref{eq:h}\quad} & \underbrace{\cT_\bh}_{\bh_t}
	\end{matrix}
\end{align}
We refer to lines~\eqref{eq:z} and \eqref{eq:f} as \emph{learnware} since they have parameters ($\Wt,\Vt,\bbb$). Line~\eqref{eq:h} is \emph{firmware} since it has no parameters. The firmware depends on the previous state $\bh_{t-1}$ unlike the learnware which is stateless. See section~\ref{sec:algebra} for more on learnware and firmware.

\paragraph{Strongly-typed LSTMs}
differ from LSTMs in two respects: (i)  $\x_{t-1}$ is substituted for $\bh_{t-1}$ in the first three equations so that the type structure is coherent; and (ii) the nonlinearities in $\bz_t$ and $\bh_t$ are removed as for the T-RNN.
\begin{align}
	& \bz_t = \tau(\Vt_{z}\bh_{t-1}+\Wt_{z}\x_t + \bbb_z)\\
	& \ft_t = \sigma(\Vt_{f}\bh_{t-1} + \Wt_{f}\x_t + \bbb_f)\\
	\text{LSTM}\qquad& \bo_t = \tau(\Vt_{o}\bh_{t-1}+\Wt_{o}\x_t + \bbb_o)\\
	& \bc_t =  \ft_t \odot \bc_{t-1} + (1-\ft_t) \odot \bz_t  \\
	& \bh_t = \tau(\bc_t) \odot \bo_t\\
	\\
	& \bz_t = \Vt_{z}\x_{t-1} + \Wt_{z}\x_{t}+\bbb_z\\
	& \ft_t = \sigma(\Vt_{f}\x_{t-1} + \Wt_{f}\x_{t} + \bbb_f) \\
	\text{T-LSTM}\qquad& \bo_t = \tau(\Vt_{o}\x_{t-1} + \Wt_{o}\x_{t}+\bbb_o)\\
	& \bc_t = \ft_t \odot \bc_{t-1} + (1-\ft_t) \odot \bz_t \\
	& \bh_t = \bc_t \odot \bo_t\\
\end{align}
We drop the input gate from the updates for simplicity; see \citep{greff:15}. The type structure is
\begin{align}
	\begin{matrix}
	\cT_\x & \xrightarrow{\quad} & \cT_\bc \\
	\cT_\x & \xrightarrow{\quad} & \cT_\ft 
	 & \xrightarrow{\diag} & (\cT_{\bc}\rightarrow \cT_{\bc})\\
	 \cT_\x & \xrightarrow{\quad} & \cT_\bh \\
	(\cT_{\bc}\rightarrow \cT_{\bc}) \times \cT_\bc  & \xrightarrow[\bc_{t-1}]{} & \cT_\bc
	& \xrightarrow{\diag} & (\cT_{\bh}\rightarrow \cT_{\bh})\\
	(\cT_{\bh}\rightarrow \cT_{\bh}) \times \cT_\bh & \xrightarrow{\quad}& \cT_\bh
	\end{matrix} 
\end{align}

\paragraph{Strongly-typed GRUs}
adapt GRUs similarly to how LSTMs were modified. In addition, the reset gate $\bz_t$ is repurposed; it is no longer needed for weak-typing.
\begin{align}
	& \bz_t = \sigma(\Vt_{z} \bh_{t-1} + \Wt_{z} \x_t + \bbb_z) \\
	\text{GRU}\qquad 
	& \ft_t = \sigma(\Vt_{f} \bh_{t-1} + \Wt_{f} \x_t + \bbb_f) \\ 
	& \bo_t = \tau\big(\Vt_{o} (\bz_t\odot \bh_{t-1}) + \Wt_{o} \x_t + \bbb_o\big) \\
	& \bh_t = \ft_t \odot \bh_{t-1} + (1-\ft_t) \odot \bo_t\\
	\\
	& \bz_t = \Vt_{z} \x_{t-1} + \Wt_{z} \x_{t} + \bbb_z \\ 
	\text{T-GRU} \qquad
	& \ft_t = \sigma(\Vt_{f} \x_{t-1} + \Wt_{f} \x_{t} + \bbb_f) \\
	& \bo_t = \tau(\Vt_{o} \x_{t-1} + \Wt_{o} \x_{t} + \bbb_o) \\
	& \bh_t = \ft_t \odot \bh_{t-1} + \bz_t\odot \bo_t\\
\end{align}
The type structure is
\begin{align}
	\begin{matrix}
	\cT_\x &\xrightarrow{\quad} &\cT_\bh & & \\
	\cT_\x &\xrightarrow{\quad}& \cT_\ft
	 & \xrightarrow{\diag} & (\cT_{\bh}  \rightarrow  \cT_{\bh})\\
	 \cT_\x & \xrightarrow{\quad} & \cT_\bo 
	 & \xrightarrow{\diag} & (\cT_{\bh}  \rightarrow  \cT_{\bh})\\
	(\cT_{\bh}\rightarrow \cT_{\bh}) & \times & (\cT_{\bh} \rightarrow  \cT_{\bh}) \times \cT_\bh  &\xrightarrow[\bh_{t-1}]{} &\cT_\bh	
	\end{matrix}
\end{align}

\subsection{Feature Semantics}
\label{sec:semantics}

The output of a vanilla RNN expands as the uninterpretable
\begin{equation}
	\bh_t = \sigma(\Vt\sigma(\Vt\sigma(\cdots)  + \Wt\x_{t-1} + \bbb) + \Wt\x_t + \bbb),
\end{equation}
with even less interpretable gradient. Similar considerations hold for LSTMs and GRUs. Fortunately, the situation is more amenable for strongly-typed architectures. In fact, their semantics are related to \emph{average-pooled convolutions}.

\paragraph{Convolutions.}
Applying a one-dimensional convolution to input sequence $\x[t]$ yields output sequence
\begin{equation}
	\bz[t] = (\Wt * \x)[t] = \sum_{s} \Wt[s]\cdot \x[t-s]
\end{equation}
Given weights $f_s$ associated with $\Wt[s]$, average-pooling yields $\bh_t = \sum_{s=1}^t f_s\cdot \bz[s]$. A special case is when the convolution applies the same matrix to every input:
\begin{equation}
	\Wt[s] = \begin{cases}
		\Wt & \text{if }s=0\\
		0 & \text{else.}
	\end{cases}
\end{equation}
The average-pooled convolution is then a weighted average of the features extracted from the input sequence. 

\paragraph{Dynamic temporal convolutions.}
We now show that strongly-typed RNNs are one-dimensional temporal convolutions with \emph{dynamic} average-pooling. Informally, strongly-typed RNNs transform input sequences into a weighted average of features extracted from the sequence
\begin{equation}
	\x_{1:t} \mapsto \expec_{\bP_{\x_{1:t}}}\Big[\Wt*\x\Big] = \sum_{s=1}^t \bP_{\x_{1:t}}(s)\cdot (\Wt\cdot \x_s) =: \bh[t]
\end{equation}
where the weights depends on the sequence. In detail:
\begin{thm}[feature semantics via dynamic convolutions]\label{thm:sem}
	Strongly-typed features are computed explicitly as follows.
	\begin{itemize}
		\item \emph{T-RNN.} The output is $\bh_t = \expec_{s\sim\bP_{\x_{1:t}}}\big[\Wt\x_s\big]$ where
		\begin{equation}
			\bP_{\x_{1:t}}(s) = \begin{cases}
				1-\ft_t & \text{if }s=t\\
				\ft_t\odot \bP_{\x_{1:t-1}}(s) & \text{else.}
			\end{cases}
		\end{equation}
		\item \emph{T-LSTM.}
		Let $\bU_{\bullet}:=[\Vt_{\bullet};\Wt_{\bullet};\bbb_\bullet]$ and $\tilde{\x}_t:=[\x_{t-1};\x_t;1]$ denote the vertical concatenation of the weight matrices and input vectors respectively. Then,
		\begin{equation}
		\bh_t = \tau\big(\bU_o\tilde{\x}_t\big)\odot \expec_{s\sim\bP_{\x_{1:t}}}\big[\bU_z\tilde{\x}_s\big]
		\end{equation}
		where $\bP_{\x_{1:t}}$ is defined as above.
		\item \emph{T-GRU.}
		Using the notation above, 
		\begin{equation}
		\bh_t = \sum_{s=1}^t \Ft_s\odot
		\Big(\tau\big(\bU_o\tilde{\x}_s\big)\odot \bU_z\tilde{\x}_s\Big)
		\end{equation}
		where 
		\begin{equation}
			\Ft_s = \begin{cases}
				1 & \text{if }s=t\\
				\ft_s\odot \Ft_{s+1} & \text{else.}
			\end{cases}
		\end{equation}
		\end{itemize}
\end{thm}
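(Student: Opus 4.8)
The plan is to prove all three formulas by unrolling the relevant state recurrence, exploiting one structural feature of the strongly-typed design: because the forget gate $\ft_t$ and output gate $\bo_t$ are computed \emph{vertically} (from $\x_{t-1},\x_t$ alone) rather than from the previous state, once the input sequence $\x_{1:t}$ is fixed they are simply constant coordinatewise multipliers — diagonalizations in the sense of operation T4. Consequently the recurrences governing $\bh_t$ (for the T-RNN and T-GRU) and $\bc_t$ (for the T-LSTM) are \emph{linear} in the coordinatewise $\odot$-algebra, hence solvable in closed form. This is exactly what fails for a classical RNN, where the gate depends on $\bh_{t-1}$ and couples all coordinates and time steps together.

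First I would handle the T-RNN, which is the core case. Induct on $t$, with base case $\bh_0=\mathbf{0}$. Assuming $\bh_{t-1}=\sum_{s=1}^{t-1}\bP_{\x_{1:t-1}}(s)\odot(\Wt\x_s)$, substitute into $\bh_t=\ft_t\odot\bh_{t-1}+(1-\ft_t)\odot\bz_t$ and distribute $\ft_t\odot(\cdot)$ across the sum using bilinearity of $\odot$. The coefficient of $\Wt\x_s$ becomes $\ft_t\odot\bP_{\x_{1:t-1}}(s)$ for $s<t$ and $(1-\ft_t)$ for $s=t$, which is precisely the stated recursion for $\bP_{\x_{1:t}}$. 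A second short induction confirms that $\bP_{\x_{1:t}}$ is a genuine coordinatewise distribution: its entries are nonnegative (since $\ft_t\in(0,1)^d$ is a sigmoid output) and sum to the all-ones vector (using $(1-\ft_t)+\ft_t\odot\bO=\bO$), so the weighted sum legitimately equals the expectation $\expec_{s\sim\bP_{\x_{1:t}}}[\Wt\x_s]$.

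The T-LSTM then follows almost immediately. Under the concatenation notation one reads off $\bU_z\tilde{\x}_s=\bz_s$ and $\tau(\bU_o\tilde{\x}_t)=\bo_t$, and the memory-cell update $\bc_t=\ft_t\odot\bc_{t-1}+(1-\ft_t)\odot\bz_t$ is \emph{identical in form} to the T-RNN update with $\Wt\x_s$ replaced by $\bU_z\tilde{\x}_s$. Hence the T-RNN argument gives $\bc_t=\expec_{s\sim\bP_{\x_{1:t}}}[\bU_z\tilde{\x}_s]$ verbatim, and multiplying by the output gate yields $\bh_t=\bc_t\odot\bo_t=\tau(\bU_o\tilde{\x}_t)\odot\expec_{s\sim\bP_{\x_{1:t}}}[\bU_z\tilde{\x}_s]$, as claimed.

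For the T-GRU the recurrence is the leaky accumulator $\bh_t=\ft_t\odot\bh_{t-1}+\mathbf{g}_t$ with $\mathbf{g}_s:=\tau(\bU_o\tilde{\x}_s)\odot\bU_z\tilde{\x}_s$ — note there is no $(1-\ft_t)$ factor on the new term, which is why the weights will be raw products of forget gates rather than a normalized distribution. Unrolling gives $\bh_t=\sum_{s=1}^t\big(\prod_{r=s+1}^{t}\ft_r\big)\odot\mathbf{g}_s$, and it remains to identify the accumulated forget-gate product with $\Ft_s$. I expect this last index bookkeeping to be the main (and essentially only) obstacle: the unrolling assigns the coefficient $\prod_{r=s+1}^{t}\ft_r$ to $\mathbf{g}_s$, so one must check that this agrees with the solution of the recursion $\Ft_s=\ft_s\odot\Ft_{s+1}$, $\Ft_t=\bO$ — taking care with the index range, since a naive reading of that recursion produces $\prod_{r=s}^{t-1}\ft_r$ and the alignment has to be pinned down precisely. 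Everything else reduces to routine coordinatewise algebra.
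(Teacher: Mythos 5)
Your unrolling-by-induction argument is exactly the ``direct computation'' that the paper offers as its entire proof, and it is correct in its essentials, including the key structural observation that the gates depend only on the inputs (not on the previous state) and therefore act as fixed coordinatewise multipliers, making each recurrence linear in the $\odot$-algebra. The two index/normalization subtleties you raise are real but do not invalidate your derivation: the printed $\Ft_s$ recursion solves to $\prod_{r=s}^{t-1}\ft_r$ whereas the unrolled coefficient of the $s$-th term is $\prod_{r=s+1}^{t}\ft_r$ (an off-by-one in the statement's bookkeeping, which your computation correctly identifies), and your side claim that $\bP_{\x_{1:t}}$ sums to the all-ones vector fails at the base case $\bh_0=\mathbf{0}$ (the total mass is $1-\prod_{r=1}^{t}\ft_r$), though this only affects the ``expectation'' interpretation and not the closed-form formula itself.
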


\begin{proof}
	Direct computation.
\end{proof}

In summary, T-RNNs compute a dynamic distribution over time steps, and then compute the expected feedforward features over that distribution. T-LSTMs store expectations in private memory cells that are reweighted by the output gate when publicly broadcast. Finally, T-GRUs drop the requirement that the average is an expectation, and also incorporate the output gate into the memory updates.

Strongly-typed gradients are straightforward to compute and interpret:

\begin{cor}[gradient semantics]\label{thm:grad}
	The strongly-typed gradients are
	\begin{itemize}
		\item T-RNN:
		\begin{align}
			\frac{\dd \bh_t}{\dd \Wt} & = \expec_{s\sim \bP_{\x_{1:t}}}\Big[ \frac{\dd}{\dd \Wt}(\bz_s)\Big]
			\\
			\frac{\dd \bh_t}{\dd \Vt} & = \expec_{s\sim \bP_{\x_{1:t}}}\Big[\bz_s\odot \frac{\dd}{\dd \Vt}\big(\log\bP_{\x_{1:t}}(s)\big)\Big]\qquad\quad
		\end{align}
		and similarly for $\frac{\dd}{\dd \bbb}$.
		\item T-LSTM:
		\begin{align}
			\frac{\dd \bh_t}{\dd \bU_o} & = 
			\frac{\dd }{\dd \bU_o}(\bo_t) \odot \expec_{s\sim\bP_{\x_{1:t}}}\Big[\bz_s\Big] \\
			\frac{\dd \bh_t}{\dd \bU_z} & = \bo_t\odot \expec_{s\sim\bP_{\x_{1:t}}}\Big[\frac{\dd}{\dd \bU_z}(\bz_s)\Big] \\ 
			\frac{\dd \bh_t}{\dd \bU_f} & = \bo_t\odot \expec_{s\sim\bP_{\x_{1:t}}}\Big[\bz_s\odot \frac{\dd}{\dd \bU_f}\big(\log \bP_{\x_{1:t}}(s)\big)\Big] 		
		\end{align}
		\item T-GRU:
		\begin{align}
			\frac{\dd \bh_t}{\dd \bU_o} & = \sum_{s=1}^t \Ft_s\odot
			\frac{\dd}{\dd \bU_o} (\bo_s)\odot \bz_s\\
			\frac{\dd \bh_t}{\dd \bU_z} & = \sum_{s=1}^t \Ft_s\odot
			\bo_s\odot \frac{\dd}{\dd \bU_z}(\bz_s)\qquad\qquad\qquad\\
			\frac{\dd \bh_t}{\dd \bU_f} & = \sum_{s=1}^t \frac{\dd }{\dd \bU_f}\big(\Ft_s\big)\odot
			\bo_s\odot \bz_s
		\end{align}
	\end{itemize}
\end{cor}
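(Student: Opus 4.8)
The plan is to differentiate the closed-form feature expressions supplied by Theorem~\ref{thm:sem}, exploiting the learnware/firmware separation. The crucial structural fact is that each feature $\bh_t$ factorizes coordinatewise into pieces in each of which only a single weight matrix appears. For the T-RNN, the pooled features $\bz_s=\Wt\x_s$ carry $\Wt$, whereas the pooling distribution $\bP_{\x_{1:t}}$ carries $\Vt$ and $\bbb$ through the gates $\ft$. For the T-LSTM, the three factors $\bo_t=\tau(\bU_o\tilde{\x}_t)$, the pooled $\bz_s=\bU_z\tilde{\x}_s$, and the distribution $\bP_{\x_{1:t}}$ isolate $\bU_o$, $\bU_z$, $\bU_f$ respectively; and similarly for the T-GRU with $\bo_s$, $\bz_s$ and the weights $\Ft_s$. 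Since the firmware (lines such as Eq.~\eqref{eq:h}) contains no parameters, each derivative is just the product rule in which all but one coordinatewise factor is constant.

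First I would handle the parameters that enter only the pooled features. For $\Wt$ in the T-RNN and $\bU_z$ in the T-LSTM and T-GRU, the pooling weights are independent of the parameter, so the derivative commutes with the weighted sum, giving forms such as $\expec_{s\sim\bP_{\x_{1:t}}}[\tfrac{\dd}{\dd\bU_z}(\bz_s)]$. Symmetrically, for the output-gate parameter $\bU_o$ the pooled expectation is a constant factor and only $\bo$ is differentiated, yielding the stated $\tfrac{\dd}{\dd\bU_o}(\bo)\odot\expec[\bz]$ pattern; for the T-GRU the same reasoning applies inside the sum $\sum_s\Ft_s\odot(\cdots)$.

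The only nontrivial step concerns the forget-gate parameters $\Vt$, $\bbb$ (T-RNN) and $\bU_f$ (T-LSTM), which enter solely through the pooling distribution $\bP_{\x_{1:t}}$. Here I would apply the score-function (log-derivative) identity $\tfrac{\dd}{\dd\theta}\bP_{\x_{1:t}}(s)=\bP_{\x_{1:t}}(s)\odot\tfrac{\dd}{\dd\theta}\log\bP_{\x_{1:t}}(s)$, which converts $\sum_s\tfrac{\dd}{\dd\theta}\bP_{\x_{1:t}}(s)\odot\bz_s$ into $\expec_{s\sim\bP_{\x_{1:t}}}[\bz_s\odot\tfrac{\dd}{\dd\theta}\log\bP_{\x_{1:t}}(s)]$, exactly as claimed; the recursion defining $\bP_{\x_{1:t}}$ is what would ultimately expand $\tfrac{\dd}{\dd\theta}\log\bP_{\x_{1:t}}$, but the compact form needs only the identity. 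For the T-GRU I would note the contrast: its pooling weights $\Ft_s$ do not form a probability distribution (as remarked after Theorem~\ref{thm:sem}), so the $\bU_f$-gradient is left as the direct sum $\sum_s\tfrac{\dd}{\dd\bU_f}(\Ft_s)\odot\bo_s\odot\bz_s$ rather than being re-expressed as an expectation.

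The calculation is otherwise routine, so the main thing to get right is the bookkeeping of the parameter separation — verifying that each weight really does appear in exactly one coordinatewise factor — together with the correct use of the score-function identity. That identity holds formally regardless of normalization, but it yields the clean expectation form only when the pooling weights sum to one, which is precisely what distinguishes the T-RNN and T-LSTM gradients from the T-GRU case.
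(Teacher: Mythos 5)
Your proposal is correct and matches the paper's (implicit) argument: the paper derives the corollary by directly differentiating the closed-form expressions of Theorem~\ref{thm:sem}, exactly as you do, with the same use of the parameter separation across the coordinatewise factors and the log-derivative identity $\frac{\dd}{\dd\theta}\bP_{\x_{1:t}}(s)=\bP_{\x_{1:t}}(s)\odot\frac{\dd}{\dd\theta}\log\bP_{\x_{1:t}}(s)$ to obtain the expectation forms. Your closing remark about why the T-GRU gradient stays as a plain sum (the weights $\Ft_s$ are not normalized) is a correct and worthwhile observation that the paper leaves implicit.
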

It follows immediately that gradients will not explode for T-RNNs or LSTMs. Empirically we find they also behave well for T-GRUs.

\subsection{Feature Algebra}
\label{sec:algebra}

A vanilla RNN can approximate any continuous state update $\bh_{t} = g(\x_t,\bh_{t-1})$ since $\text{span}\{s(\wt^\intercal\x)\,|\,\wt\in \bR^d\}$ is dense in continuous functions ${\mathcal C}(\bR^d)$ on $\bR^d$ if $s$ is a nonpolynomial nonlinear function \citep{leshno:93}. It follows that vanilla RNNs can approximate any recursively computable partial function \citep{siegelmann:95}. 

Strongly-typed RNNs are more constrained. We show the constraints reflect a coherent design-philosophy and are less severe than appears.

\paragraph{The learnware / firmware distinction.}
Strongly-typed architectures factorize into stateless \emph{learnware} and state-dependent \emph{firmware}. For example, T-LSTMs and T-GRUs factorize\footnote{A superficially similar factorization holds for GRUs and LSTMs. However, their learnware is \emph{state-dependent}, since $(\ft_t,\bz_t,\bo_t)$ depend on $\bh_{t-1}$.} as
\begin{align}
	(\ft_t,\bz_t,\bo_t) & = \text{T-LSTM}^\text{learn}_{\{\Vt_\bullet, \Wt_\bullet, \bbb_\bullet\}}(\x_{t-1},\x_t) \\
	(\bh_t,\bc_t) & = \text{T-LSTM}^\text{firm}(\ft_t,\bz_t,\bo_t;\underbrace{\bc_{t-1}}_{\text{state}})
	\\
	(\ft_t,\bz_t,\bo_t) & = \text{T-GRU}^\text{learn}_{\{\Vt_\bullet, \Wt_\bullet, \bbb_\bullet\}}(\x_{t-1},\x_t) \\
	\bh_{t} & = \text{T-GRU}^\text{firm}(\ft_t,\bz_t,\bo_t; \underbrace{\bh_{t-1}}_{\text{state}}).
\end{align}
Firmware decomposes coordinatewise, which prevents side-effects from interacting: e.g. for T-GRUs
\begin{gather}
	\text{T-GRU}^\text{firm}(\ft,\bz,\bo; \bh) = 
	\Big(\varphi(f^{(i)},z^{(i)},o^{(i)};h^{(i)})
	\Big)_{i=1}^d\\
	\text{where }
	\varphi(f,z,o;h) = fh + zo
\end{gather}
and similarly for T-LSTMs. Learnware is stateless; it has no side-effects and does not decompose coordinatewise. Evidence that side-effects are a problem for LSTMs can be found in \citep{zaremba:15} and \citep{laurent:15}, which show that Dropout and Batch Normalization respectively need to be restricted to vertical connections.

In short, under strong-typing the learnware carves out features which the firmware uses to perform \emph{coordinatewise} state updates $h^i_t = g(h_{t-1}^i, z_{t-1}^i)$. Vanilla RNNs allow \emph{arbitrary} state updates $\bh_t = g(\bh_{t-1},\x_{t})$. LSTMs and GRUs restrict state updates, but allow arbitrary functions of the state. Translated from a continuous to discrete setting, the distinction between strongly-typed and classical architectures is analogous to working with binary logic gates (AND, OR) on variables $\bz_t$ learned by the vertical connections -- versus working directly with $n$-ary boolean operations.

\paragraph{Representational power.}
Motivated by the above, we show that a minimal strongly-typed architecture can span the space of continuous \emph{binary} functions on features.
\begin{thm}[approximating binary functions]\label{thm:rep}
	The strongly-typed \emph{minimal} RNN with updates
	\begin{equation}
		\text{T-MR:}\quad \bh_t = \rho(\bbb\odot \bh_{t-1} + \Wt\x_t + \bc)
	\end{equation}
	and parameters $(\bbb$, $\bc$, $\Wt)$ can approximate any set of continuous binary functions on features.
\end{thm}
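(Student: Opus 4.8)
The plan is to reduce the statement to the classical universal approximation theorem for single-hidden-layer relu networks, which is already available as \citep{leshno:93}: the span of ridge functions $\{\rho(\beta h + \gamma z + \delta) : \beta,\gamma,\delta\in\bR\}$ is dense in $\cC(K)$ for any compact $K\subset\bR^2$, because $\rho=\relu$ is nonpolynomial. Following the firmware picture of section~\ref{sec:algebra}, a \emph{binary function on features} is a continuous map $g:\bR^2\to\bR$ whose two arguments are the retained state coordinate and the incoming feature. It therefore suffices to show that one step of the T-MR, read out linearly, can approximate any such $g$ uniformly on compacts.

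First I would write out a single update coordinatewise: coordinate $j$ computes $h_t^{(j)} = \rho\big(b_j\, h_{t-1}^{(j)} + (\Wt\x_t)^{(j)} + c_j\big)$. The key observation is that the three parameters attached to coordinate $j$ — the diagonal recurrent weight $b_j$, the $j$-th row of $\Wt$, and the bias $c_j$ — are exactly the three degrees of freedom $(\beta_j,\gamma_j,\delta_j)$ of a ridge function. Concretely, given a base feature $z=\langle\wt_0,\x_t\rangle$, set the $j$-th row of $\Wt$ to $\gamma_j\wt_0$ so that $(\Wt\x_t)^{(j)}=\gamma_j z$, and set $b_j=\beta_j$, $c_j=\delta_j$; then $h_t^{(j)} = \rho(\beta_j h_{t-1}^{(j)} + \gamma_j z + \delta_j)$.

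Next I would fix the hidden dimension $n$ large enough for the target accuracy and supply the scalar state argument $h$ through the diagonal embedding $\bh_{t-1}=h\cdot\bO_n$, so that every coordinate sees the common value $h$. A linear readout $\sum_{j=1}^n a_j h_t^{(j)}$ then realizes precisely the finite ridge-function expansion $\sum_j a_j\rho(\beta_j h + \gamma_j z + \delta_j)$, which by \citep{leshno:93} can be taken within $\epsilon$ of $g(h,z)$ on $K$; the outer weights $a_j$ live in the RNN's output map. To realize an entire \emph{set} of binary functions simultaneously, I would allocate a disjoint block of coordinates to each member and concatenate: because the firmware is coordinatewise the blocks do not interact, so the full set is approximated in parallel. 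Iterating the update then composes these approximate binary gates across time, mirroring the circuit-of-gates picture of section~\ref{sec:algebra}.

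The main obstacle is precisely that the strongly-typed firmware is strictly coordinatewise: the recurrence $\bbb\odot\bh_{t-1}$ never mixes coordinates, so a single unit only ever produces a ridge function of its own history and cannot by itself be a universal approximator. The proof must therefore route the approximation entirely through the \emph{span} of the coordinates — hence the diagonal embedding $h\cdot\bO_n$ of the shared state together with the linear readout forming the linear combination. I would take care to verify that $\rho=\relu$ meets the nonpolynomial hypothesis of \citep{leshno:93}, and that confining the previous-state input to the diagonal $\{h\cdot\bO_n\}$ is compatible with the stated update, so that the guarantee transfers cleanly from the feedforward one-hidden-layer network to one step of the recurrent map.
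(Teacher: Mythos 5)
Your proposal is correct and follows essentially the same route as the paper's proof sketch: both reduce to the density result of \citep{leshno:93} for spans of ridge functions $\rho(\beta h+\gamma z+\delta)$, realize the three ridge parameters via the diagonal recurrent weight $b_j$, a row of $\Wt$ that is a scalar multiple of the base feature direction $\wt_0$, and the bias $c_j$, and handle a whole set of binary functions by allocating additional coordinates. Your explicit treatment of the shared-state embedding $h\cdot\bO_n$ and of where the outer linear combination lives merely fills in details the paper compresses into the homogeneity identity $a\rho(bh+z+c)=\rho(abh+az+ac)$.
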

\begin{proofsketch}
	Let $z = \wt^\intercal \x$ be a feature of interest.
	Combining \citep{leshno:93} with the observation that $a\rho(bh + z + c)=\rho(abh+az+ac)$ for $a>0$ implies that $\overline{\text{span}}\{\rho(b \cdot h_{t-1} + z_t)\,|\,b,c\in\bR\}={\mathcal C}(\bR^2)$. As many weighted copies $a z$ of $z$ as necessary are obtained by adding rows to $\Wt$ that are scalar multiples of $\wt$.

	Any set of binary functions on any collection of features can thus be approximated. Finally, vertical connections can approximate any set of features \citep{leshno:93}.
\end{proofsketch}

\section{Experiments}
\label{sec:exp}
\begin{table*}[!htb]
\caption{The \textbf{(train, test) cross-entropy loss} of RNNs and T-RNNs on WP dataset.}
\label{tab:wp_rnnres}
\centering
\begin{tabular}{| l | c | c | c | c | c | c |}
\hline
Model & \multicolumn{3}{|c|}{vanilla RNN} & \multicolumn{3}{|c|}{T-RNN} \\
\hline
\hline
Layers & 1 & 2 & 3 & 1 & 2 & 3 \\
\hline
64 (no dropout) & (1.365, 1.435)  &  (1.347, 1.417) & (1.353, 1.423)  &   (1.371, 1.452) & (\textbf{1.323}, 1.409)  &  (1.342, 1.423)\\
256 & (1.215, 1.274)  &  (1.242, \textbf{1.254}) &  (1.257, 1.273)& (1.300, 1.398)&  (1.251, 1.276) & (1.233, 1.266) \\
\hline
\end{tabular}
\end{table*}

\begin{table*}[!htb]
\caption{The \textbf{(train, test) cross-entropy loss} of LSTMs and T-LSTMs on WP dataset.}
\label{tab:wp_lstmres}
\centering
\begin{tabular}{| l | c | c | c | c | c | c |}
\hline
Model & \multicolumn{3}{|c|}{LSTM} & \multicolumn{3}{|c|}{T-LSTM} \\
\hline
\hline
Layers & 1 & 2 & 3 & 1 & 2 & 3 \\
\hline
64 (no dropout) & (1.496, 1.560) & (1.485, 1.557) & (1.500, 1.563) &(1.462, 1.511) & (\textbf{1.367}, 1.432) & (1.369, 1.434)  \\
256 & (1.237, 1.251) & (1.098, 1.193) & (1.185, 1.213) & (1.254, 1.273) &  (1.045, \textbf{1.189}) & (1.167, 1.198)\\
\hline
\end{tabular}
\end{table*}

\begin{table*}[!htb]
\caption{The \textbf{(train, test) cross-entropy loss} of GRUs and T-GRUs on WP dataset.}
\label{tab:wp_grures}
\centering
\begin{tabular}{| l | c | c | c | c | c | c |}
\hline
Model & \multicolumn{3}{|c|}{GRU} & \multicolumn{3}{|c|}{T-GRU} \\
\hline
\hline
Layers & 1 & 2 & 3 & 1 & 2 & 3 \\
\hline
64 (no dropout)& (1.349, 1.435) & (1.432, 1.503) & (1.445, 1.559) &  (1.518 ,1.569) & (\textbf{1.337}, 1.422) & (1.377, 1.436)\\
256 & (1.083, 1.226) & (1.163, 1.214) & (1.219, 1.227) & (1.142, 1.296) & (1.208, 1.240) & (1.216, \textbf{1.212}) \\
\hline
\end{tabular}
\end{table*}
We investigated the empirical performance of strongly-typed recurrent nets for sequence learning. 
The performance was evaluated on character-level and word-level text generation.
We conducted a set of proof-of-concept experiments. The goal is not to compete with previous work or to find the best performing model under a specific hyper-parameter setting. Rather, we investigate how the two classes of architectures perform over a range of settings.

\subsection{Character-level Text Generation}
The first task is to generate text from a sequence of characters by predicting the next character in a sequence.
We used Leo Tolstoy's \emph{War and Peace} (WP) which consists of 3,258,246 characters of English text, split into train/val/test sets with 80/10/10 ratios.
The characters are encoded into $K$-dimensional \emph{one-hot} vectors, where $K$ is the size of the vocabulary.
We follow the experimental setting proposed in \citep{karpathy:15a}. Results are reported for two configurations: ``64'' and ``256'', 
which are models with the same number of parameters as a 1-layer LSTM with 64 and 256 cells per layer respectively.
Dropout regularization was only applied to the ``256'' models.  The dropout rate was taken from $\{0.1, 0.2 \}$ based on validation performance.
Tables \ref{tab:wp_lstmres} and \ref{tab:wp_grures} summarize the performance in terms of cross-entropy loss $H(\y, \bp) = \sum_{i=1}^K y_i\log p_i$.

We observe that the training error of strongly-typed models is typically lower than that of the standard models for $\geq 2$ layers. The test error of the two architectures are comparable. However, our results (for both classical and typed models) fail to match those reported in \citep{karpathy:15a}, where a more extensive parameter search was performed.

\begin{table}[!htb]
 \caption{Perplexity on the Penn Treebank dataset. }
\label{tab:ptb_res}
 \centering
 \begin{tabular}{| l || c | c | c | c | c | }
  \hline
  Model & Train & Validation & Test  \\
  \hline
  \multicolumn{4}{|c|}{small, no dropout} \\
  \hline
  vanilla RNN & 416.50 & 442.31 & 432.01  \\
  T-RNN & 58.66 & 172.47 & 169.33  \\

  \hline
  LSTM  & 36.72 & 122.47 & \textbf{117.25}  \\
  T-LSTM & \textbf{28.15} & 215.71 & 200.39 \\
  \hline
  GRU  & 31.14 & 179.47 & 173. 27  \\
T-GRU & 28.57 & 207.94 & 195.82 \\
  \hline 
  \multicolumn{4}{|c|}{medium, with dropout} \\
  \hline
  LSTM \tiny{\citep{zaremba:15}} & \textbf{48.45} & 86.16 & 82.70 \\
  LSTM (3-layer) & 71.76 & 98.22 & 97.87 \\ 
  T-LSTM & 50.21 & 87.36 & 82.71 \\
  T-LSTM (3-layer) & 51.45 & 85.98 & \textbf{81.52} \\
  \hline
  GRU & 65.80 & 97.24 & 93.44 \\
 T-GRU & 55.31 & 121.39 & 113.85 \\
  \hline
 \end{tabular}
\end{table}

\subsection{Word-level Text Generation}
The second task was to generate word-level text by predicting the next word from a sequence.
We used the Penn Treebank (PTB) dataset \citep{marcus:1993}, which consists of 929K training words, 73K validation words, and 82K test words, with vocabulary size of 10K words.
The PTB dataset is publicly available on web.\footnote{\url{http://www.fit.vutbr.cz/~imikolov/rnnlm/simple-examples.tgz}}

We followed the experimental setting in \citep{zaremba:15} and compared the performance of ``small'' and ``medium'' models. 
The parameter size of ``small'' models is equivalent to that of $2$ layers of $200$-cell LSTMs, while the parameter size of ``medium'' models is the same as that of $2$ layers of $650$-cell LSTMs.
For the ``medium'' models, we selected the dropout rate from \{0.4, 0.5, 0.6\} according to validation performance.
Single run performance, measured via \emph{perplexity}, i.e., $\exp(H(\y, \bp))$, are reported in Table \ref{tab:ptb_res}.

\paragraph{Perplexity.}
For the ``small'' models, we found that the training perplexity of strongly-typed models is consistently lower than their classical counterparts, in line with the result for War \& Peace. Test error was significantly worse for the strongly-typed architectures. A possible explanation for both observations is that strongly-typed architectures require more extensive regularization. 

An intriguing result is that the T-RNN performs in the same ballpark as LSTMs, with perplexity within a factor of two. By contrast, the vanilla RNN fails to achieve competitive performance. This suggests there may be strongly-typed architectures of intermediate complexity between RNNs and LSTMs with comparable performance to LSTMs.

The dropout-regularized ``medium'' T-LSTM matches the LSTM performance reported in \citep{zaremba:15}. The 3-layer T-LSTM obtains slightly better performance. The results were obtained with almost identical parameters to Zaremba et al (the learning rate decay was altered), suggesting that T-LSTMs are viable alternatives to LSTMs for sequence learning tasks when properly regularized.
Strongly-typed GRUs did not match the performance of GRUs, possibly due to insufficient regularization.

\paragraph{Gradients.}
We investigated the effect of removing gradient clipping on medium-sized LSTM and T-LSTM. T-LSTM gradients are well-behaved without clipping, although test performance is not competitive. In contrast, LSTM gradients explode without clipping and the architecture is unusable. It is possible that carefully initialized T-LSTMs may be competitive without clipping. We defer the question to future work.

\paragraph{Runtime.}
Since strongly-typed RNNs have fewer nonlinearities than standard RNNs, 
we expect that they should have lower computational complexity.
Training on the PTB dataset on an NVIDIA GTX 980 GPU, we found that T-LSTM is on average $\sim1.6\times$ faster than LSTM. Similarly, the T-GRU trains on average $\sim1.4\times$ faster than GRU.

\section{Conclusions}

RNNs are increasingly important tools for speech recognition, natural language processing and other sequential learning problems. The complicated structure of LSTMs and GRUs has led to searches for simpler alternatives with limited success \citep{bayer:09,greff:15,jozefowicz:15,le:15,mikolov:15}. This paper introduces strong-typing as a tool to guide the search for alternate architectures. In particular, we suggest searching for update equations that learn well-behaved features, rather than update equations that ``appear simple''. We draw on two disparate intuitions that turn out to be surprisingly compatible: (i) that neural networks are analogous to measuring devices \citep{balduzzi:11idm} and (ii) that training an RNN is analogous to writing code.

The main contribution is a new definition of type that is closely related to singular value decomposition -- and is thus well-suited to deep learning. It turns out that classical RNNs are badly behaved from a type-perspective, which motivates modifying the architectures. Section~\ref{sec:trnns} tweaked LSTMs and GRUs to make them well-behaved from a typing and functional programming perspective, yielding features and gradients that are easier to reason about than classical architectures. 

Strong-typing has implications for the depth of RNNs. It was pointed out in \citep{pascanu:14a} that unfolding horizontal connections over time implies the concept of depth is not straightforward in classical RNNs. By contrast, depth has the same meaning in strongly-typed architectures as in feedforward nets, since vertical connections learn features and horizontal connections act coordinatewise.

Experiments in section~\ref{sec:exp} show that strongly-typed RNNs achieve comparable generalization performance to classical architectures when regularized with dropout and have consistently lower training error. It is important to emphasize that the experiments are not conclusive. Firstly, we did not deviate far from settings optimized for classical RNNs when training strongly-typed RNNs. Secondly, the architectures were chosen to be as close as possible to classical RNNs. A more thorough exploration of the space of strongly-typed nets may yield better results. 

\paragraph{Towards machine reasoning.}
A definition of machine reasoning, adapted from \citep{bottou:14}, is ``algebraically manipulating features to answer a question''. Hard-won experience in physics \citep{chang:04}, software engineering \citep{dijkstra:68}, and other fields has led to the conclusion that well-chosen constraints are crucial to effective reasoning. Indeed, neural Turing machines \citep{graves:14} are harder to train than more constrained architectures such as neural queues and deques \citep{grefenstette:15}.

Strongly-typed features have a consistent semantics, theorem~\ref{thm:sem}, unlike features in classical RNNs which are rotated across time steps -- and are consequently difficult to reason about. We hypothesize that strong-typing will provide a solid foundation for algebraic operations on learned features. Strong-typing may then provide a useful organizing principle in future machine reasoning systems.

\paragraph{Acknowledgements.}
We thank Tony Butler-Yeoman, Marcus Frean, Theofanis Karaletsos, JP Lewis and Brian McWilliams for useful comments and discussions.

{\footnotesize

}

\end{document}